\newtheorem{theorem}{Theorem}
\newtheorem{lemma}[theorem]{Lemma}
\title{Learning by Sampling and Compressing: Efficient Graph Representation Learning with Extremely Limited Annotations}
\author{
Xiaoming Liu\textsuperscript{\rm 1,*}~
Qirui Li\textsuperscript{\rm 1,*}~
Chao Shen\textsuperscript{\rm 1}~
Xi Peng\textsuperscript{\rm 2}~
Yadong Zhou\textsuperscript{\rm 1}~
Xiaohong Guan \textsuperscript{\rm 1}
}
\begin{document}

\maketitle

\begin{abstract}
Graph convolution network (GCN) attracts intensive research interest with broad applications.
While existing work mainly focused on designing novel GCN architectures for better performance, few of them studied a practical yet challenging problem: How to learn GCNs from data with extremely limited annotation?
In this paper, we propose a new learning method by sampling strategy and model compression to overcome this challenge. 
Our approach has multifold advantages:
1) the adaptive sampling strategy largely suppresses the GCN training deviation over  uniform sampling;
2) compressed GCN-based methods with a smaller scale of parameters need fewer labeled data to train;
3) the smaller scale of training data is beneficial to reduce the human resource cost to label them.
We choose six popular GCN baselines and conduct extensive experiments on three real-world datasets. 
The results show that by applying our method, all GCN baselines cut down the annotation requirement by as much as 90$\%$ and compress the scale of parameters more than 6$\times$  without sacrificing their strong performance.
It verifies that the training method could extend the existing semi-supervised GCN-based methods to the scenarios with the extremely small scale of labeled data.
\end{abstract}

\section{Introduction} \label{sec:1}
Graph is a natural way to represent and organize data with complicated relationships. 
But graph data is hard to process by the machine learning methods directly, especially the deep learning \cite{lecun2015deep}, which has achieved brilliant achievements in various fields. Learning a useful graph representation lies at the heart of many deep learning-based graph mining applications, such as node classification, link prediction, and community detection \cite{zhang2020seal}, etc. 
It is now widely adopted to embed the structure data into vectors for well-developed deep learning methods. 

Recently, a semi-supervised method represented by graph convolutional network(GCN) has been a hot topic in the graph embedding area, and massive outstanding works are proposed\footnote{We will use convolutional graph network as the representative of semi-supervised graph embedding methods through the paper}. 
Kipf\shortcite{kipf2017semi} came up with GCN, the one widely used today, which has formally brought the field of graphs into the neural networks' era. 
Since then, plenty of work like GraphSAGE\cite{hamilton2017inductive} and graph attention networks(GAT)\cite{velivckovic2017graph} are proposed to achieve better performance.

\begin{figure}[!t]
  \centering
  \subfloat[Uniform Sampling\label{subfig1}]{%
  \includegraphics[width=0.23\textwidth]{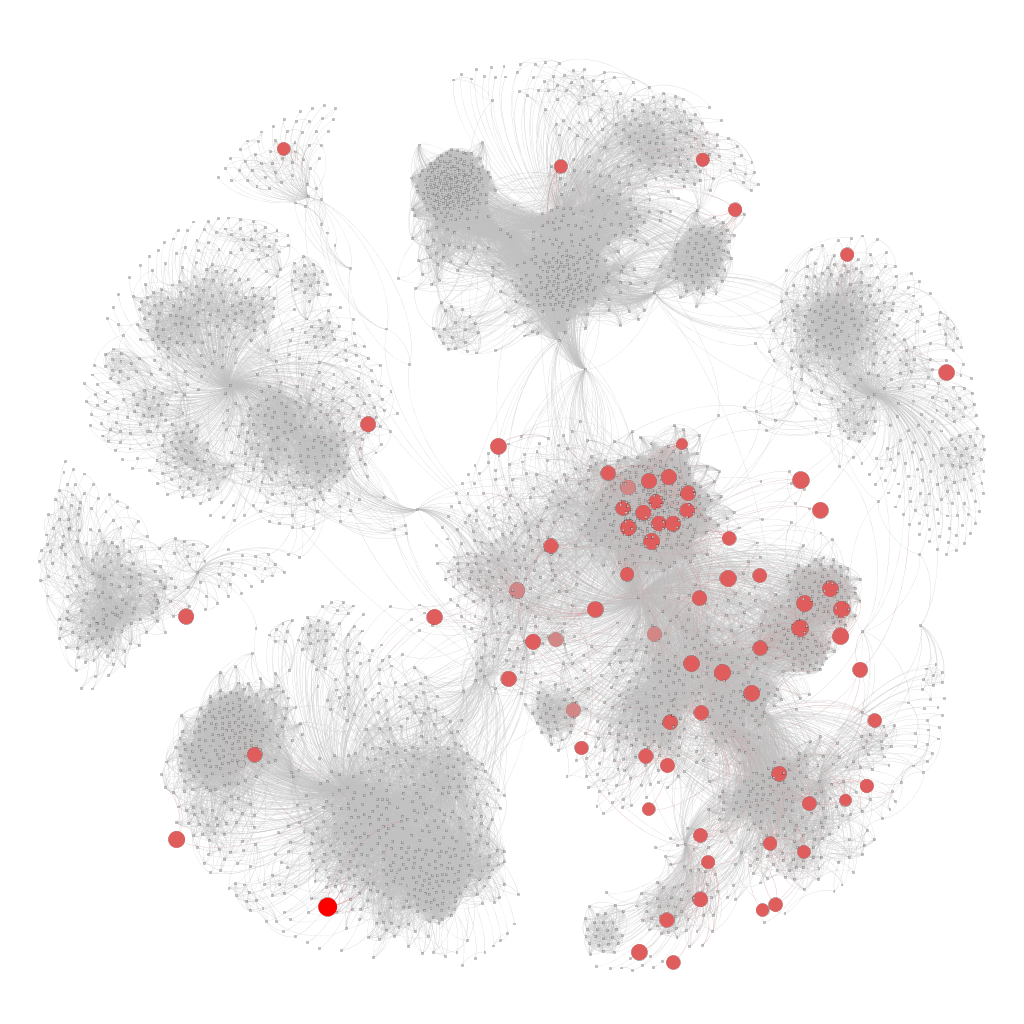}
  }
  \hfill
  \subfloat[Random Walks Sampling\label{subfig2}]{%
  \includegraphics[width=0.23\textwidth]{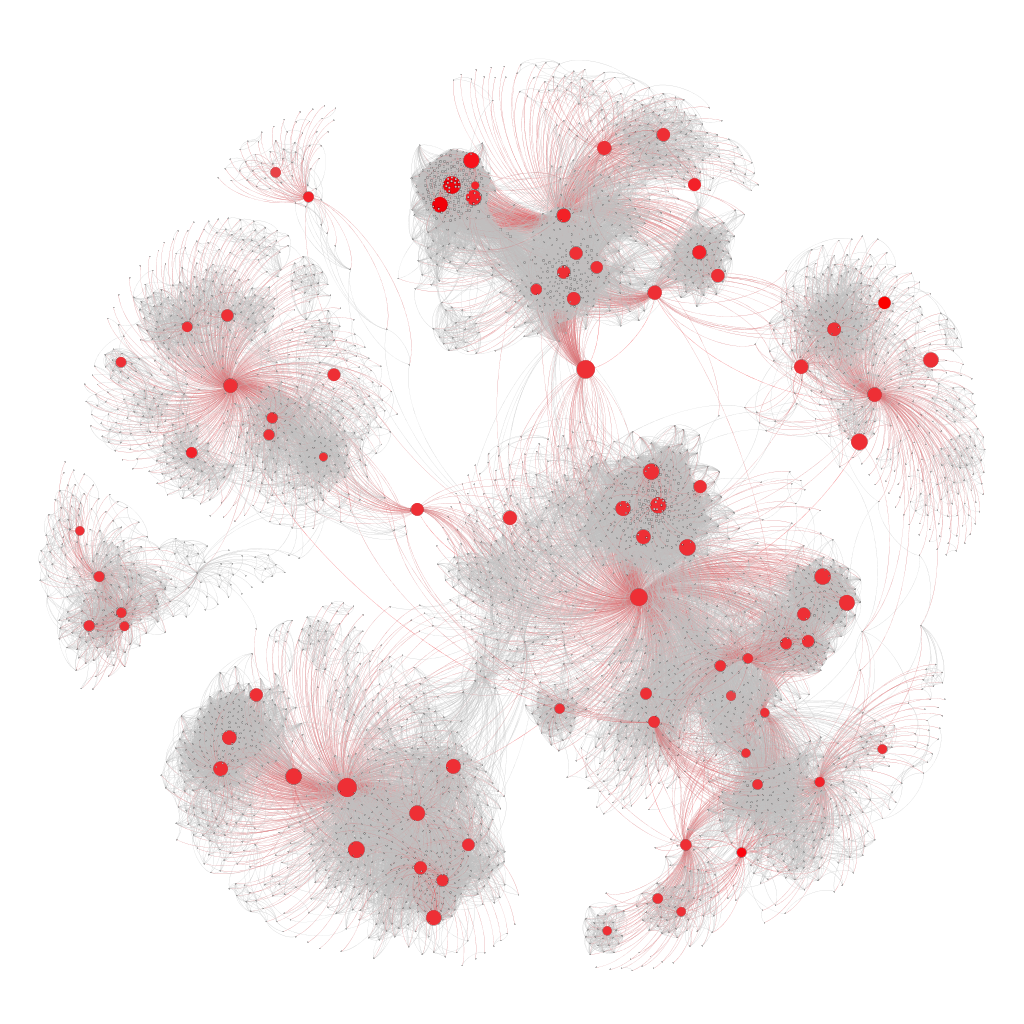}
  }
  \caption{Comparison between uniformly sampled training nodes and random walks-based sampled training nodes in limited ratio (red nodes).}
  \label{img1}
   \vspace{-0.5cm}
 \end{figure}
 
However, there are two key challenges in applying these semi-supervised methods to specific fields: 
1) extremely limited annotation, which is far from enough to train the model well; 
2) out-of-distribution prediction, i.e., the training set's distribution differs from the test set much. 
Works like GCN\cite{kipf2017semi} and GAT\cite{velivckovic2017graph} tend to manually pick the training set to maintain the distribution-similarity. 
On the other hand, some researchers seek to overcome the challenges by utilizing transfer learning\cite{hu2019pretraining,yun2019graph}. 
But the `pre-train' methods need extensive domain knowledge and pretty long training time.

To address the research gap,  we proposed an efficient training framework for GCN-based methods, especially in extremely limited labeled data, which is more scalable and needless of domain knowledge.
We integrate a random walks-based sampling strategy with the model compression method into the GCN-based model training process. 
In this way, our framework utilizes the sampling algorithm to find out the most representative nodes of a graph based on the multiple dependent random walks\cite{liu2014detecting}, which have proven successful in graph measurement and graph structure estimating. 
The comparison between random walks-based sampled nodes and uniformly chosen nodes with a limited ratio is shown in Fig. \ref{img1}.
When the sampling scale is small,  nodes chosen uniformly are more likely to be in a small sub-graph, which would make the GCN model learning bias during the training process. 
Besides, we take advantage of the matrix compression method to reduce the scale of parameters in the GCN-based model, which is also beneficial to reduce the requirement of annotations.

 To demonstrate the proposed framework's validation in the training process, we implement it with six latest GCN-based methods on three real-world datasets.
 Their performances are evaluated on the challenging multi-label node classification problem with limited labeled training data. 
 The result shows that each implemented method can outperform its original one with the same scale of training data, or achieve the same accuracy with fewer labeled data.
 
The contribution of this paper is summarized as follows:
\begin{itemize}
    \item We utilize the sampling strategy based on multiple dependent random walks as a preprocessing stage to improve the performance of GCN-based methods. 
    It can reduce the requirement for the scale of labeled data and improve the performance without changing the original methods.  
    We also develop a matrix decomposition method to compress the GCN-based model, which reduce the scale of parameters from $O(b*c)$ to $O(d*r*\max(b, c))$, where $b$ and $c$ are the dimensions of the weight matrix $W$, and $d, r \ll \min(b,c)$.
    The compressed model would need fewer labeled data to train.
    \item
    We propose a general training framework for GCN-based methods by integrating a random walks-based sampling strategy with the model compression method, which could make original GCN-based methods obtaining better results and dealing with the application using an extremely small scale of labeled data.
    \item We implement the proposed training framework to six GCN-based methods on three different real-world datasets.
    The numeral evaluation on the multi-label node classification problem shows that our framework can make GCN-based methods achieving the original performance with just 10$\%$-50$\%$ of the scale of training data.
    Besides, the parameters' scale is compressed by more than 6$\times$ with 16$\%$ additional time cost. 
\end{itemize}

\section{Problem Definition} \label{sec:2}

 Most of the state-of-the-art GCN-based methods are faced with one common training problem, i.e., the requirement for a large number of labeled nodes as the training set to achieve satisfying performance.
We seek to solve this problem by proposing an efficient training framework, in which the sampling strategy is integrated with the model compression method.
In this way, we only need to label the sampled tiny-scale nodes set to train the GCN-based methods with reduced parameters, which can achieve or even surpass the original models' performance obtained by  much larger training data.

\begin{figure}
  \centering
  \includegraphics[width=0.48\textwidth]{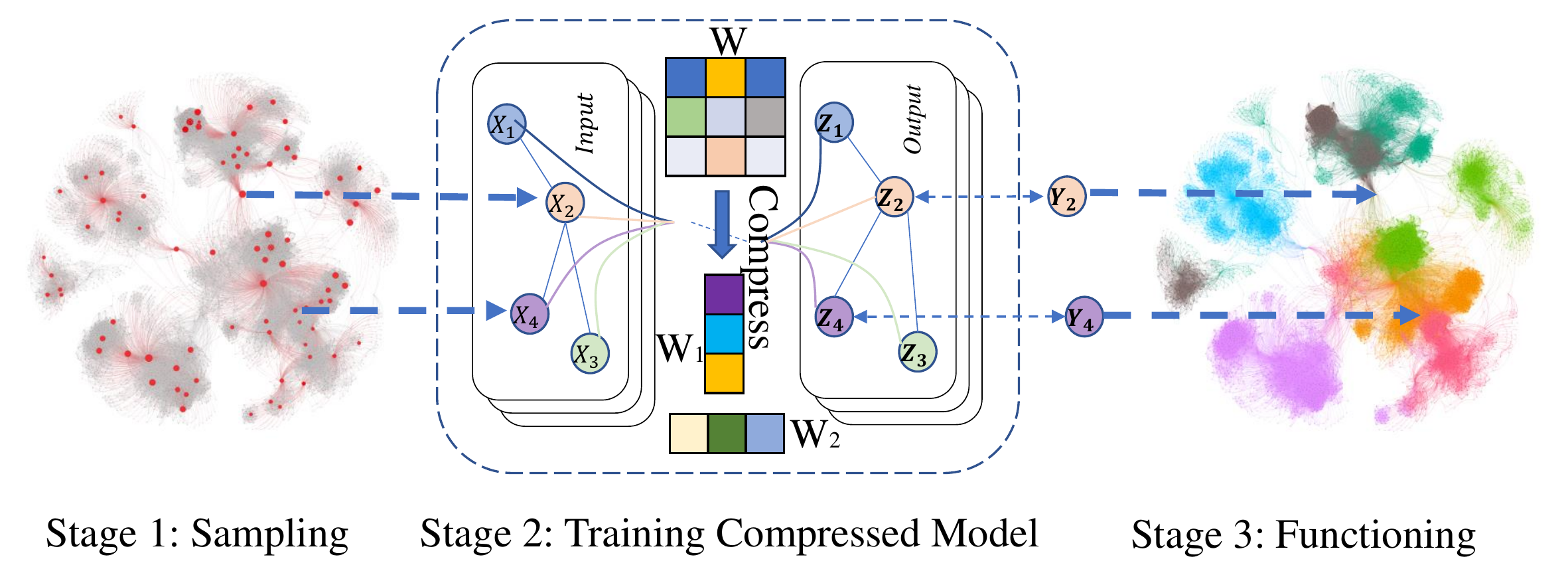}
  \caption{Framework of the proposed training framework.}
  \label{img2}
   \vspace{-0.5cm}
\end{figure}

This problem has attracted the attention of researchers, and the most related works are FastGCN\cite{chen2018fastgcn} and GraphSAGE\cite{hamilton2017inductive}. 
They also utilize sampling methods to improve the performance based on the aggregation feature selection, which is simplified as \textit{F}-sampling.
But it is necessary to clarify that their sampling strategies and ours mentioned in this paper are not the same concepts.
Especially, both FastGCN and GraphSAGE focus on the strategy to sample the neighbor's information during the training computation process.
We, differently, use a sampling strategy as a preprocessing method by selecting nodes to be labeled out of the training computation process, which is simplified as \textit{N}-sampling. 

In short, \textit{F}-sampling strategy aims to select \textbf{features} for the training nodes by changing the workflow of original models, but \textit{N}-sampling is determined to select the more representative training \textbf{nodes} without changing anything of the original model.
Besides, we also design a general model compression method by matrix decomposition theory, which can reduce the original GCN model's parameter scale.
It can also make the model more adaptable to extremely limited labeled data.
The workflow of our idea is shown in Fig. \ref{img2}.
In this way, our proposed framework is independent of the GCN-based methods and can be extended easily to almost every graph representation method.

\section{Methodology} \label{sec:4}

\subsection{Workflow of the Training Framework}
Our framework can be generalized into a three-stage algorithm, which is illustrated in Fig. \ref{img2}.
First, we implement a sampling strategy on the graph $\mathcal{G}=(V, E, W)$ to determine which nodes need to be labeled, where $v \in V$ is one node, $e \in E$ is one edge representing a connection between two nodes, and $w \in W$ is the edge weight indicating the closeness of two nodes.
The sampling process can be formulated as 
\begin{equation}
    V_s, X_s, Y_s = f(V,X,Y,B),
\end{equation}
where $f$ is the sampling strategy,  \textit{B} denotes the expected sampling budge, $V_s$, $X_s$, and $Y_s$ represent the sampled nodes, the corresponding feature matrix and labeled embedding vectors, respectively.

Then, model compression based on the matrix decomposition is applied to the GCN-based method, which could reduce the scale of parameters significantly.
The key computation equation in GCN-based model is transformed as 

\begin{align}
\notag 
F(H^{l}, A) = \sigma\left( \hat{D}^{-\frac{1}{2}}\hat{A}\hat{D}^{-\frac{1}{2}}H^{l}W^{l}\right) \xRightarrow{\text{compress}} \\
 F_c(H^{l}, A) = \sigma\left(\sum \left( \hat{D}^{-\frac{1}{2}}\hat{A}\hat{D}^{-\frac{1}{2}}H^{l}_d\prod{G_d[i_d,j_d]} \right) \right),
\end{align}
where $A$ and $D$ are the adjacent matrix and diagonal matrix, $H^{l}$ is the output of $l$-th neural network layer, and $G_d$ is a kernel matrix for the decomposition of the weigh matrix $W^{l}$, whose rank is much lower than $W^{l}$.  

Finally, the proposed training framework based on the sampling data for limited annotations is generalized as
\begin{equation}
  Model = F_c(V_s,X_s,Y_s)
\end{equation}
where $F_c$  represents the compressed model.
After the model has been well trained, it can be functional as usual.


Our framework satisfies the following characteristics:
\begin{itemize}
  \item Small scale - Our original intention is to downsize the training set for a graph convolutional network, so the graph embedding model could be trained well with limited labeled data;
 \item Unbiased - Sampling strategy aims to find out the well-distributed nodes which are more representative in the graph, and can depict the structure of the graph;
  \item Low time complexity - The time complexity for the sampling algorithm and model compression is nearly linear, which would not increase the time complexity of the original model;
  what's more, the well-chosen training dataset can make the model converging quickly, which has a good chance to lower the computational training cost; 
  \item  Scalability - The computation of the sampling stage and training stage are independent, and the matrix decomposition is a general method, which make our framework  easily implemented on different models and different datasets.
\end{itemize}

\subsection{Sampling Strategy}

The GCN-based models usually adopt the `uniformly random sampling'(UR) method to choose the training dataset, in which one node is sampled by the probability of $p(v)=\frac{1}{n}$, where $n$ refers to the number of nodes in one graph.
This method lacks considering the structure of the graph, which leads to unstable performance for its performance.

In order to overcome the disadvantage of UR, we take advantage of random walks(RW) sampling strategy to select the representative nodes as the training data.
We adopt the random walks as the sampling strategy for several considerations: 1) random walks is a wildly used method in graph structure estimation \cite{liu2016miracle}; 2) it can obtain the well-distributed nodes with linear computation time complexity.

The random walks-based sampling algorithm starts from a root node \begin{math}v_i\end{math}.
After pushing \begin{math}v_i\end{math} into the  traversed node list \begin{math}L\end{math}, it chooses the next sampling node \begin{math}v_j\end{math} from the neighbours $N(v_i)$ of \begin{math}v_i\end{math} based on the probability,
\begin{equation}
    p(v_j)=\frac{w_{ij}}{\sum \nolimits_{v_k \in N(i)}w_{ik}},
\end{equation}
where $w_{ij}$ is the weight/preference from $v_i$ to $v_j$, and $w_{ij}=1$ for unweighted graph.
When the number of sampled nodes reaches the budget $B$, the random walks stops, and the sampled training dataset is obtained.

But the sampling strategy based on single random walks is still faced with several technical challenges.
First, it has a relatively high demand for the structure of the graph.
For example, if the graph is unconnected, the single random walks would only explore the subgraph where it starts.
In this situation, the sampling training dataset inevitably leads to learning bias for their limited distribution.
Second, even if the graph is connected, the single walker is easily trapped in a closely connected subgraph.
This phenomenon would also affect the sampling performance heavily.

Inspired by the previous work\cite{ribeiro2010estimating}, we employ multiple random walks to overcome technical challenges.
It performs $m$ dependent walkers simultaneously, who share the candidate list $\mathcal{L}$ together.
$\mathcal{L}$ is initialized by uniformly choosing \begin{math}k\end{math} nodes from the global graph.
When a walker coming to its $t$-th hop \begin{math}v^t\end{math},
the walker's next hop \begin{math}v_j^{t+1}\end{math} is selected from the list \begin{math}\mathcal{L}^t\end{math}  with the probability $p(v_j^{t+1})$,
\begin{equation}\label{eq:4}
      p(v_j^{t+1}) = \frac{w(v_j^{t+1})}{\sum_{\forall{v_k\in\mathcal{L}^t}}w(v_k)}
\end{equation}
where $ w(v_k)=\sum \nolimits_{v_i \in N(k)} w_{ki}$.
Then, we replace $v_j$ by $v_u \in N(j)$ in $\mathcal{L}^t$, and obtain $\mathcal{L}^{t+1}$.
Notice that $v_u$ is selected based on the weight of $w_{ju}$.

Thus, the multiple walkers\footnote{`walker' and `dimension' have the same definition for the sampling process in this paper.} with the strategy of `Frontier Sampling'\cite{ribeiro2010estimating} are less likely to be trapped in a closely connected part of the whole graph. 
In order to demonstrate the superiority of this sampling strategy, we take the label density prediction, one important graph characteristic, to indicate the distribution of the sampling nodes.

Detailedly, each node $v$ is associated with a label $Q(v) \in \{q_1,q_2,...,q_k\}$.
The label density on graph $G$ is represented by $\theta=\{\theta_1,\theta_2,...,\theta_j\}$, and $\theta_i$ is defined as, 
\begin{equation} \label{equation:density}
    \begin{aligned}
     \theta_i = \frac{1}{n} \sum_{v \in V} \mathbbm{1}(Y(v)=q_i), 
    \end{aligned}
\end{equation}
where $\mathbbm{1}$ is the indicator function, which  assigned the value by 1 if $Y(v)=q_i$, otherwise 0.

Based on the sampling data, the unbiased estimator \cite{zhao2019sampling} for label density is defined as
\begin{equation} \label{estimator}
    \begin{aligned}
     & \hat{\theta_i} = \frac{1}{C} \sum_{v \in V_s} {\frac{\mathbbm{1}(Y(v)=q_i)}{\pi(v)}},\\
     & C = \sum_{v \in V_s} \frac{1}{\pi(v)},
    \end{aligned}
\end{equation}
where $\pi(v)$ is the probability that node $v$ is sampled.
Especially, $\pi(v)$ equals to $1/n$ in uniformly randomly sampling, and $w(v)/(\sum \nolimits_{v_j \in V}w(v_j))$ in random walks at steady state, respectively. 

\begin{theorem} \label{theorem1}
For a single random walker, $(\hat{\theta}_i^{RW}-\theta_i)^2 \leq (\hat{\theta}_i^{UR}-\theta_i)^2$.
\end{theorem}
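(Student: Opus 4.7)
The plan is to interpret the stated bound as a comparison of mean squared errors, i.e., $E[(\hat{\theta}_i^{RW}-\theta_i)^2]\leq E[(\hat{\theta}_i^{UR}-\theta_i)^2]$, since a pathwise inequality on every realization is clearly too strong---a random walker trapped in an unrepresentative cluster can easily yield a worse single estimate than a lucky uniform draw. Under this reading, both estimators are (at least asymptotically) unbiased by construction of (7), so the MSE collapses to the variance, and the whole statement reduces to a variance comparison between two Horvitz--Thompson / self-normalized importance samplers.

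First I would specialize the generic estimator (7) to each of the two sampling laws. Under UR, $\pi(v)=1/n$, so the inverse-probability weights cancel, the normalizer $C$ is deterministic, and $\hat{\theta}_i^{UR}=\tfrac{1}{B}\sum_{v\in V_s}\mathbbm{1}(Y(v)=q_i)$ is a plain sample mean with variance $\theta_i(1-\theta_i)/B$. Under RW at steady state $\pi(v)=w(v)/W$ with $W=\sum_j w(v_j)$, and (7) becomes a ratio estimator with weights $1/w(v)$; I would apply the delta method around the stationary means of $X_v=\mathbbm{1}(Y(v)=q_i)/w(v)$ and $Z_v=1/w(v)$, namely $E_\pi X_v=n\theta_i/W$ and $E_\pi Z_v=n/W$, to obtain an explicit asymptotic variance expression.

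Next I would rewrite the two variances over a common index set so that both depend on the squared deviation $(\mathbbm{1}(Y(v)=q_i)-\theta_i)^2$ and differ only in the per-node prefactor. The algebraic core of the comparison is the Cauchy--Schwarz bound $W\cdot\sum_v 1/w(v)\geq n^2$, which controls the ratio between the RW prefactor and the UR prefactor. Assembling this with the delta-method expansion yields the desired inequality whenever the label indicator is not pathologically aligned with extreme values of the weight $w(v)$.

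The main obstacle I expect is precisely that last caveat. If label $q_i$ concentrates on nodes of atypically small or atypically large weight, the Cauchy--Schwarz step can flip sign and the RW estimator can strictly lose. To close the gap in full generality one would likely need either (i) an implicit assumption---defensible in the setting of this paper---that the label distribution and the weights are approximately independent, or (ii) a Markov-chain covariance correction that exploits the serial dependence of the single walker to tighten the variance beyond the naive i.i.d.\ importance-sampling bound. Reconciling the clean Cauchy--Schwarz argument with arbitrary label--weight correlation is the principal technical hurdle; the remaining steps are routine moment algebra together with the delta method for ratio estimators.
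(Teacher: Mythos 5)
Your reading of the statement as an MSE comparison is a reasonable charitable reconstruction, but it is not the paper's argument, and---more importantly---the route you sketch cannot be completed: the gap you flag at the end is not a technicality but fatal. Carry out the delta-method computation you propose: with $X_v=\mathbbm{1}(Y(v)=q_i)/w(v)$, $Z_v=1/w(v)$ and $v\sim\pi(v)=w(v)/W$, the self-normalized estimator has mean-zero increments $X_v-\theta_i Z_v=(\mathbbm{1}(Y(v)=q_i)-\theta_i)/w(v)$, and its asymptotic variance is $\tfrac{W}{Bn^2}\sum_v (\mathbbm{1}(Y(v)=q_i)-\theta_i)^2/w(v)$, versus $\tfrac{1}{Bn}\sum_v(\mathbbm{1}(Y(v)=q_i)-\theta_i)^2$ for uniform sampling. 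In your case (i), where labels are independent of weights, the comparison reduces to $\tfrac{W}{n^2}\sum_v 1/w(v)$ versus $1$, and your own Cauchy--Schwarz bound $W\cdot\sum_v 1/w(v)\ge n^2$ then shows the random-walk variance is \emph{larger}. The assumption you hoped would rescue the claim is precisely the regime in which it fails; the inequality could only hold under a favorable label--weight correlation that neither you nor the paper assumes, and the serial dependence of the walker (your case (ii)) generically inflates, rather than deflates, the variance further.

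For comparison, the paper's own proof is a direct algebraic manipulation of the \emph{realized} (not expected) squared errors: it expands both sides of the inequality, passes to a simplified inequality in which the $\hat\theta_i^{UR}$ terms have been dropped without justification, and closes by asserting $2\sum_v\mathbbm{1}(Y(v)=q_i)-n\hat\theta_i^{RW}>0$ \emph{because} $\mathbb{E}[\hat\theta_i^{RW}]=\tfrac{1}{n}\sum_v\mathbbm{1}(Y(v)=q_i)$, i.e., it substitutes the expectation of the estimator for its realization in order to establish a pathwise bound. That step is invalid for exactly the reason you give in your opening paragraph: a pathwise inequality cannot hold for arbitrary realizations, since an unlucky walk can do worse than a lucky uniform draw. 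So your diagnosis of why the stated claim is too strong is correct and your route is genuinely different from the paper's, but neither argument establishes the theorem, and your proposed variance comparison in fact points in the opposite direction from the claimed result.
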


\begin{proof}

Combining the equation \ref{equation:density} and equation \ref{estimator}, the original inequality can be written as
\begin{equation}\label{eq: 8}
    \begin{aligned}
    & (\hat{\theta}_i^{RW})^2 - \frac{2}{n}\sum_{v \in V}\mathbbm{1}(Y(v)=q_i) \hat{\theta}_i^{RW} 
    \leq \\
    & \frac{(\sum_{v \in V_s^{UR}} \mathbbm{1}(Y(v)=q_i))^2}{B^2} \ - \\ &\frac{2\sum_{v \in V}\mathbbm{1}(Y(v)=q_i) \sum_{v \in V_s^{UR}}\mathbbm{1}(Y(v)=q_i)}{nB},
    \end{aligned}
\end{equation}
where  $B$  represents the number of the sampled nodes. 
By simplifying equation \ref{eq: 8}, we can obtain 
\begin{equation}\label{eq: 9}
        nB\hat{\theta}_i^{RW} \leq 2B\sum_{v \in V}\mathbbm{1}(Y(v)=q_i).
\end{equation}

Because $\mathbb{E}(\hat{\theta}_i^{RW}) = \frac{1}{n}\sum_{v \in V}\mathbbm{1}(Y(v)=q_i)$, then we have $2\sum_{v \in V}\mathbbm{1}(Y(v)=q_i)-n\hat{\theta}_i^{RW} > 0$, which makes equation \ref{eq: 9} satisfied.
\end{proof}

 Theorem \ref{theorem1}  indicates that the label density $\hat{\theta}_i^{RW}$  estimated by node sequence $V_s^{RW}$ sampled by random walks is closer to $\theta_i$ than  $\hat{\theta}_i^{UR}$ estimated by uniformly randomly sampled sequence $V_s^{UR}$.
 This result reveals that the sample nodes by random walks have better distribution than the ones by uniformly randomly sampled sequences.
 
 

\begin{lemma} \label{lemma1}
The adopted m-dimensional dependent random walks process performs better on label density estimation than a single random walker.
\end{lemma}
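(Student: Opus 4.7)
The plan is to establish Lemma \ref{lemma1} by showing that the mean squared error of the $m$-dimensional dependent random walks (mRW) estimator $\hat{\theta}_i^{mRW}$ is no larger than that of the single random walker estimator $\hat{\theta}_i^{RW}$ treated in Theorem \ref{theorem1}. Both are Horvitz--Thompson estimators of the form in equation \ref{estimator}, so the first step is to verify that the Frontier Sampling transition rule in equation \ref{eq:4} admits the same degree-proportional steady-state marginal $\pi(v) = w(v)/\sum_{v_j \in V} w(v_j)$ at each of the $m$ walkers. Granting this, $\hat{\theta}_i^{mRW}$ is unbiased in steady state, so the MSE reduces to the variance and the comparison between the two estimators becomes a pure variance comparison.

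The core technical step is to decompose $\mathrm{Var}(\hat{\theta}_i^{mRW})$ into the sum of per-walker variances plus cross-walker covariances. After the $1/m$ normalisation built into the pooled estimator, the per-walker term contributes exactly $\frac{1}{m}\mathrm{Var}(\hat{\theta}_i^{RW})$, so what remains is to bound the covariance between any two walkers. I would argue this in two parts: (i) the uniform seeding of the candidate list $\mathcal{L}$ over the whole graph guarantees that the $m$ walkers start in distinct components or distant regions, eliminating the trapping phenomenon that inflates the variance of a single walker confined to a subgraph; (ii) once the mRW process has mixed, the shared-list dependence induces only a mild positive correlation that is small in $m$, by appeal to the regenerative property of Frontier Sampling established by \citet{ribeiro2010estimating}.

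The main obstacle is part (ii): controlling the covariance introduced by the shared candidate list, since standard variance-of-sum bounds would require independence which fails here. I would handle this by coupling each walker's trajectory with an independent stationary random walk drawn from $\pi$ and showing that the coupling error shrinks as the walkers diffuse away from one another in $\mathcal{L}$; this yields a cross-covariance bound that vanishes relative to the single-walker variance when $m$ grows. Combining the resulting variance inequality $\mathrm{Var}(\hat{\theta}_i^{mRW}) \leq \mathrm{Var}(\hat{\theta}_i^{RW})$ with unbiasedness, and chaining with Theorem \ref{theorem1}, delivers $\mathbb{E}[(\hat{\theta}_i^{mRW} - \theta_i)^2] \leq \mathbb{E}[(\hat{\theta}_i^{RW} - \theta_i)^2]$, which is precisely the claim of Lemma \ref{lemma1}.
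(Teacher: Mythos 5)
First, a point of reference: the paper does not actually prove Lemma \ref{lemma1}. Its entire ``proof'' is the sentence stating that the lemma follows from the conclusions of \citet{ribeiro2010estimating}, i.e.\ it defers wholesale to the Frontier Sampling paper, which establishes that the $m$-dimensional walk is equivalent to a single random walk on the $m$-fold product graph, has the degree-proportional stationary marginal per coordinate, and enjoys a strictly smaller asymptotic variance (via its spectral-gap/CLT analysis) than a single walker. So you are attempting something the paper never does, and your overall strategy --- reduce MSE to variance via unbiasedness, then compare variances --- is the right frame and matches the cited source in spirit.

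That said, your sketch has two concrete gaps that would need to be closed before it is a proof. (i) The budget accounting in your variance decomposition is off: if the $m$ walkers share a total budget $B$ (as in Algorithm \ref{alg1}, where the loop runs $B$ times over all walkers combined), each walker contributes roughly $B/m$ samples, so the per-walker term is $\frac{1}{m}\mathrm{Var}(\hat{\theta}_i^{RW,B/m})$, not $\frac{1}{m}\mathrm{Var}(\hat{\theta}_i^{RW,B})$; since a single walker's variance decreases in its budget, the claimed $1/m$ head start over the full-budget single walker does not come for free, and this is exactly where the real content of the comparison lives. (ii) Your handling of the cross-walker covariance is the load-bearing step and remains heuristic: ``mild positive correlation that is small in $m$'' is the conclusion you need, not an argument for it, and the coupling you describe would at best yield an inequality for $m$ sufficiently large, whereas the paper runs with $m=3$. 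A positive correlation of order comparable to the single-walker variance would cancel the $1/m$ gain entirely. The honest routes are either to invoke the product-chain spectral-gap comparison from \citet{ribeiro2010estimating} directly (which is what the paper implicitly does), or to prove a quantitative covariance bound; your proposal gestures at the second without supplying the estimate that makes it work.
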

Lemma \ref{lemma1} has been proved by the conclusion of previous work \cite{ribeiro2010estimating}.
Theorem \ref{theorem1} has shown better performance of the single random walks over uniform sampling, and the adopted sampling method in this paper out-performs the single random walks proved in lemma \ref{lemma1}. 
We can deduce theorem \ref{theorem2} by combining theorem \ref{theorem1} and lemma \ref{lemma1}.

\begin{theorem} \label{theorem2}
The nodes sampled by the multiple dependent random walks have better distribution than nodes sampled uniformly based on the label density estimation.
\end{theorem}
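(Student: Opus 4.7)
The plan is to derive Theorem \ref{theorem2} as a straightforward transitive consequence of Theorem \ref{theorem1} and Lemma \ref{lemma1}, since the two preceding results together cover exactly the two links needed to compare multiple dependent random walks against uniform sampling. In other words, I do not expect to need any new probabilistic machinery: the ``better distribution'' claim is operationalized through the squared deviation $(\hat{\theta}_i - \theta_i)^2$ of the label density estimator, so the theorem reduces to concatenating two inequalities on this quantity.

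First I would fix the common metric: interpret ``better distribution'' uniformly as ``smaller squared deviation of the unbiased label-density estimator $\hat{\theta}_i$ from the true density $\theta_i$ defined in Eq.~\ref{equation:density} and Eq.~\ref{estimator}''. Writing $\hat{\theta}_i^{mRW}$ for the estimator obtained from the $m$-dimensional dependent random walks sampler, the statement to prove is $(\hat{\theta}_i^{mRW}-\theta_i)^2 \leq (\hat{\theta}_i^{UR}-\theta_i)^2$. Lemma \ref{lemma1} supplies $(\hat{\theta}_i^{mRW}-\theta_i)^2 \leq (\hat{\theta}_i^{RW}-\theta_i)^2$, and Theorem \ref{theorem1} supplies $(\hat{\theta}_i^{RW}-\theta_i)^2 \leq (\hat{\theta}_i^{UR}-\theta_i)^2$; chaining the two inequalities yields the desired bound, and hence the theorem. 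I would present this as a two-line derivation.

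The main (and essentially only) obstacle is notational and conceptual consistency rather than technical difficulty: Lemma \ref{lemma1} is cited from \cite{ribeiro2010estimating}, where the precise figure of merit is typically the estimator's variance or mean-squared error rather than a realized squared deviation, while Theorem \ref{theorem1} is phrased as a pointwise inequality on realized squared deviations. To bridge this, I would either (a) read Theorem \ref{theorem1} in expectation so that both inequalities become statements about $\mathbb{E}[(\hat{\theta}_i-\theta_i)^2]$ and then chain them, or (b) explicitly invoke the unbiasedness of both estimators (already noted via $\mathbb{E}(\hat{\theta}_i^{RW}) = \theta_i$) so that the relevant ``better distribution'' notion collapses to estimator variance in both cases. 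Option (a) is cleaner and I would adopt it, adding one sentence to justify that the inequality of Theorem \ref{theorem1} carries over to expectations.

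Finally, I would close with a one-sentence interpretive remark tying the bound back to the practical claim of the paper: because the multi-walker estimator produces a label density closer to ground truth than uniform sampling does, the labeled training set it selects is more representative of the global graph structure, which is precisely the property exploited by the downstream GCN training stage. No delicate calculation is anticipated; the proof will essentially be three lines of inequality chaining plus a short justification of the common metric.
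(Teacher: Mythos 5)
Your proposal matches the paper's own argument exactly: the paper derives Theorem~\ref{theorem2} by the same transitive chaining of Theorem~\ref{theorem1} (single random walk beats uniform sampling on squared deviation of the label-density estimator) with Lemma~\ref{lemma1} (multiple dependent walkers beat a single walker, cited from the frontier-sampling work). Your added care about reconciling the pointwise squared-deviation statement of Theorem~\ref{theorem1} with the variance/MSE figure of merit in the cited lemma is a refinement the paper itself does not bother with, but it does not change the route.
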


Therefore, the theoretical analysis indicates that the adopted sampling strategy based on random walks could obtain more representative nodes. 
It is significant to train the GCN model with limited data efficiently.

\subsection{Model Compression}\label{sec:com}
Except for the sampling strategy to reduce the training dataset scale, we also propose a matrix decomposition-based method to compress the GCN models.
Detailedly, we utilized the Tensor-Train compression method \cite{oseledets2011tensor} to reduce the size of a GCN-based method, which could lower the scale of parameters to be estimated.


The targeted convolution layer in GCN can be simplified as 
\begin{equation}\label{gcn-origin}
    \begin{aligned}
        H^{l+1} = A*H^{l}*W
    \end{aligned}
\end{equation}
where $W$ is the weight matrix of parameters to be estimated, $A$ is the adjacent matrix and $H^{l}$ is the output of the $l$-th neural network layer.

The tensor-train format of can be easily applied on equation \ref{gcn-origin}, which transforms it into 
\begin{equation}\label{gcn-decompose}
    \begin{aligned}
        \mathcal{H}^{l+1}(i_1,...,i_d) = \sum{A\mathcal{H}^{l}(j_1,...,j_d)G_1[i_1,j_1]...G_d[i_d,j_d]}
    \end{aligned}
\end{equation}
where $W$ with rank 2 is decomposed into the format as the product of a series of kernel matrices $G$.

After the computation transformation as equation \ref{gcn-decompose}, the parameters needed to be estimated are the kernels $G_k$, whose largest rank is denoted as tt-rank $r$ \cite{grasedyck2011introduction}. 
The number of model parameters will be shrunk from $b*c$ to no more than $d*r*\max(b, c))$, where $b, c$ are the scale of two dimensions for the weight matrix $W$, $d$ is the number of kernels. $r$ and $k$ are usually much smaller than $b$ and $c$,
which makes this kind of format capable of reducing the scale of parameters.
We denote the equation \ref{gcn-decompose} as equation \ref{gcn-decompose-2} to highlight the $G_i$. 
\begin{equation}\label{gcn-decompose-2}
    \begin{aligned}
        \mathcal{H}^{l+1}(i_1,...,i_d) = 
        \sum{A\mathcal{H}^{l}(j_1,...,j_d)P_{k}^{-}G_{k}P_{k}^{+}}
    \end{aligned}
\end{equation}

$P_{k}^{-}$ represents the production of $G_t(t<k)$, and $P_{k}^{+}$ is the production of $G_t(t>k)$. The training process for $G_k$ would be computed as equation \ref{gcn-chain-rule} and \ref{gcn-chain-rule-2} using chain rule, where $L$ represents the Loss function.
\begin{equation}\label{gcn-chain-rule}
    \begin{aligned}
        \frac{\partial L}{\partial G_k} = \frac{\partial L}{\partial \mathcal{H}^{l+1}}\frac{\partial {H}^{l+1}}{\partial G_k}
    \end{aligned}
\end{equation}

\begin{equation}\label{gcn-chain-rule-2}
    \begin{aligned}
        \frac{\partial {H}^{l+1}}{\partial G_k} = \sum{A\mathcal{H}^{l}(j_1,...,j_d)(P_{k}^{-})^{T} (P_{k}^{+})^{T}}
    \end{aligned}
\end{equation}

Equation \ref{gcn-chain-rule-2} can be calculated by dynamic programming \cite{novikov2015tensorizing}, and equation \ref{gcn-chain-rule} can be computed by substituting the results of equation \ref{gcn-chain-rule-2}.
We will carry on detailed experiments to evaluate the compression capacity.

\subsection{Implementation}

The proposed training framework provides an easy and effective way to decrease the training set scale. 
We intend to apply this framework on  GCN-based methods in a general way. 
We present the implementation details of the framework in Algorithm\ref{alg1}, which consists of three-stage.


\begin{algorithm}[tb]

\caption{Efficient Graph Representation Learning Framework}
\label{alg1}
\textbf{Input:} graph $\mathcal{G}$, test nodes $V_{test}$, corresponding test features $X_{test}$, GCN-based method $F$\\
\textbf{Parameter:} sampling budget $B$, number of the walkers $m$, compressed kernels rank $r$, threshold for early stopping $\epsilon$\\
\textbf{Output:} task-oriented $Predict$ 
\begin{algorithmic}[1]
\STATE Initialize $\mathcal{L}=(v_1,...,v_m)$ with $m$ uniformly random chosen nodes
\STATE $i\leftarrow0$, $V_s \leftarrow \emptyset$
\WHILE {$i < B$}
  \STATE Select $v\in \mathcal{L}$ and  $v\notin V_s $ with probability calculated by equation \label{eq:4}\\
   \STATE Add $v$ into list $V_s$
  \STATE Select a neighbour $u \notin V_s $ of $v$ based on the edge weight $w(v,u)$
  \STATE Replace $v$  by $u$ in $\mathcal{L}$
  \STATE $i \leftarrow i + 1$
\ENDWHILE   \textit{~$\#$Sampling Process End}
\STATE $W=G_{1}G_{2}...G_{d}$
\STATE Substitute $W$ in $F(H^{(l), A})$ with result of line 10 and obtain the compressed model $F_c(H^{(l)}, A)$
\STATE $Model = F_c(V_s,X_s,Y_s)$ 
\WHILE{condition hold} 
  \STATE Calculating $\frac{\partial L}{\partial G_k}$ with by equation \ref{gcn-chain-rule-2}, using sampled nodes $Y_s$
  \STATE Updating $G_k$ and calculating error $\delta_i$
\ENDWHILE
\textit{~$\#$ Training the compressed model by the sampling data}
\STATE $Predict = Model(V_{test},X_{test}),  V \notin S$ \newline   \textit{~$\#$ Task oriented functioning process} 
\STATE \textbf{return} $Predict$
\end{algorithmic}

\end{algorithm}

\subsection{Time Complexity}
According to Algorithm \ref{alg1}, the proposed framework's time complexity is determined by the sampling method and model compression method.
For the sampling strategy based on random walks, its time complexity $T_1$  depends only on the scale of the sampling budget $B$
\begin{equation}
  T_1 \propto B,\\ B = \frac{n}{k},\\ k \in (1,+\infty),
\end{equation}
where $n$ represents the number of nodes in the graph $\mathcal{G}$. 
Thus, the time complexity of the sampling algorithm is lower than $O(n)$. 
For the model compression, its time complexity $T_2$ is related to the matrix decomposition method, which is linear to the number of nodes $n$,
\begin{equation}\label{time:2}
    T_2=O(r^4n),
\end{equation}
where $r$ is the hyper parameter for decomposition, and $r \ll n$.
Thus, the proposed training framework's total time complexity is $T_1+T_2 \approx O(n)$.

\section{Evaluation}\label{sec:5}
We verify our proposal by the multi-class classification task on three real-world datasets, including two citation networks and one social network. 
In the citation networks, the nodes are papers and the edges are the citation relationship. 
Each paper has a feature vector that contains the information of its contents. 
Classes implicate the kind of categories among the papers. 
And for the social network, the nodes represent users using social media, and an edge between two users means the follower-followed relation. 
And the details for these datasets are presented in Table \ref{table1}.

\begin{table}
  \centering
  \caption{Overview of Graph Datasets}
  \begin{tabular}{lcrrrc} \hline
  Dataset&Type&Node&Edges&Classes\\ \hline
  Cora & Citation& 2,707 &5,429&7\\ 
  Pubmed & Citation &  19,717 &44,338&3\\
  BlogCatalog & Social & 10,312 & 333,983 & 10 \\
  \hline\end{tabular}
  \label{table1}
\end{table}

\subsection{Experimental Settings}
We implement our framework on six popular GCN-based methods to verify its validity. 
For the sampling strategy, we set the number of multiple random walks as $m=3$.
For the training data scales(sampling budget), we range the sampling ratio in $[0.5\%,1\%,5\%,10\%]$ for each dataset.
For model compression, we set the decompose tt-rank $r$ as 8.
For the baseline algorithms, we choose the same scale of training data from the graphs by uniformly sampling strategy; 
One hundred nodes are randomly selected from the training set as the validation part.
The prediction accuracy is evaluated on another randomly selected 1000 nodes for each dataset. We use the `Cross-Entropy Loss'\cite{zhang2018generalized} as our loss function during the experiments.
We use `SS-' with the original method name to represent methods combined with our framework.
We perform each experiment 10 times and take the average results as the final results.
The experiment on FastGCN is based on the code released by the original authors\footnote{https://github.com/matenure/FastGCN}, and all the other algorithms are implemented based on the Deep Graph Library (DGL)\footnote{https://github.com/dmlc/dgl}.


\begin{table*}[!t]
\centering
\caption{Accuracy on Multi-label Classification}  
\begin{tabular}{lllll|llll|llll}
\hline
             & \multicolumn{4}{c}{Cora}                                                           & \multicolumn{4}{c|}{Pubmed}                                   & \multicolumn{4}{c}{BlogCatalog}                               \\ \cline{2-13} 
Training Set Ratio & 0.5\%         & 1\%           & 5\%           & \multicolumn{1}{l|}{10\%}          & 0.5\%         & 1\%           & 5\%           & 10\%          & 0.5\%         & 1\%           & 5\%           & 10\%          \\ \hline
GCN          & 0.63          & 0.64          & 0.73          & \multicolumn{1}{l|}{0.80}          & 0.39          & 0.65          & 0.78          & 0.79          & 0.25          & 0.30          & 0.31          & 0.33          \\
SS-GCN       & \textbf{0.70} & \textbf{0.71} & \textbf{0.75} & \multicolumn{1}{l|}{\textbf{0.83}} & \textbf{0.63} & \textbf{0.73} & \textbf{0.78} & \textbf{0.81} & \textbf{0.28} & \textbf{0.30} & \textbf{0.33} & \textbf{0.34} \\ \hline
GraphSAGE    & 0.63          & 0.57          & 0.79          & \multicolumn{1}{l|}{0.85}          & 0.70          & 0.79          & 0.82          & 0.83          & 0.27          & 0.27          & 0.32          & 0.34          \\
SS-GraphSAGE & \textbf{0.69} & \textbf{0.66} & \textbf{0.86} & \multicolumn{1}{l|}{\textbf{0.85}} & \textbf{0.78}          & \textbf{0.85} & \textbf{0.85} & \textbf{0.85} & \textbf{0.33} & \textbf{0.32} & \textbf{0.34} & \textbf{0.36}          \\ \hline
SGC          & 0.47          & 0.61 & 0.79          & \multicolumn{1}{l|}{0.83}          & 0.78          & 0.79          & 0.81          & 0.83          & 0.25          & 0.27          & 0.32          & 0.33          \\
SS-SGC       & \textbf{0.56} & \textbf{0.68} & \textbf{0.81} & \multicolumn{1}{l|}{\textbf{0.84}} & \textbf{0.81} & \textbf{0.82} & \textbf{0.83} & \textbf{0.84}          & \textbf{0.33} & \textbf{0.33} & \textbf{0.33} & \textbf{0.35}          \\ \hline
FastGCN          & 0.18          & 0.26 & 0.33          & \multicolumn{1}{l|}{0.33}          & 0.56         & 0.59          & 0.63          & 0.63          & 0.19          & 0.23          & 0.27          & 0.27         \\
SS-FastGCN       & \textbf{0.23} & \textbf{0.32} & \textbf{0.38} & \multicolumn{1}{l|}{\textbf{0.38}} & \textbf{0.59} & \textbf{0.64} & \textbf{0.66} & \textbf{0.66}          & \textbf{0.25} & \textbf{0.27} & \textbf{0.30} & \textbf{0.31}          \\ \hline
TAGCN        & 0.56          & 0.56          & 0.79          & \multicolumn{1}{l|}{0.79}          & 0.70          & 0.77          & 0.80          & 0.83          & 0.30 & 0.30 & 0.33          & 0.32          \\
SS-TAGCN     & \textbf{0.70} & \textbf{0.71} & \textbf{0.80} & \multicolumn{1}{l|}{\textbf{0.84}} & \textbf{0.79} & \textbf{0.80} & \textbf{0.86} & \textbf{0.85} & \textbf{0.31}          & \textbf{0.32}          & \textbf{0.34} & \textbf{0.34} \\ \hline
APPNP        & 0.68          & 0.71          & 0.72          & \multicolumn{1}{l|}{0.80}          & 0.74          & 0.80          & 0.83          & 0.83          & 0.26          & 0.32          & 0.32          & 0.32          \\
SS-APPNP     & \textbf{0.72} & \textbf{0.79} & \textbf{0.84} & \multicolumn{1}{l|}{\textbf{0.85}} & \textbf{0.79} & \textbf{0.81} & \textbf{0.85} & \textbf{0.86} & \textbf{0.32} & \textbf{0.33} & \textbf{0.33} & \textbf{0.34} \\ \hline
\end{tabular}
\label{table2}
\end{table*}

\subsection{Baseline Methods}
The state-of-the-art GCN-based baselines are summarized as follows:
\begin{itemize}
  \item GCN~\cite{kipf2017semi}: This is the first wildly used graph convolutional network method to embed the graph structure into vectors. 
  It takes the graph structure and a few labeled nodes as input and outputs the node embedding vector.
   \item GraphSAGE~\cite{hamilton2017inductive}: This method is built on GCN, which aggregates the neighbors' features to depict one node. 
   In this way, the framework can deal with dynamic graph structure.
  \item SGC~\cite{wu2019simplifying}: This method speeds up the GCN's training time by removing nonlinearities and collapsing weight matrices between consecutive layers.
  \item FastGCN~\cite{chen2018fastgcn}: This method also tries to speed up the GCN's training time by sampling active nodes between layers, which performs like 'dropout' in traditional neural networks.
  \item TAGCN~\cite{du2017topology}: This method designs a set of fixed-size learnable node filters to perform convolutions on graphs.
  It differs from the spectral domain of the origin GCN.
  \item APPNP~\cite{klicpera2018predict}: This method uses the idea of PageRank\cite{page1999pagerank} to improve the performance of GCN by utilizing propagation procedure to construct a simple model.
\end{itemize}

\subsection{Result Comparison}
We now validate the effectiveness of our framework by combining it with 6 GCN-related baseline algorithms and compare them with the original ones. 
Specifically, we use the task of node classification for evaluation. 
The experimental results are shown in Table \ref{table2}. 
We bold the better result for each comparison pair, and the detailed analysis is presented as follows.

When the training dataset ratio is set at $0.5\%$, we find that the ones with our framework can significantly outperform the original ones by $4\%-60\%$. 
The proposed training framework can obtain an improvement of $10\%$ on average with the extremely limited labeled data. 
And the greatest improvement happens with GCN and SS-GCN on the Pubmed dataset.

With the scale of the training dataset rising, the improvement is becoming smaller, but still exists.
We can get an improvement of $4\%$ on average when the training scale ratio is set as $10\%$.
Observing from another perspective on the prediction accuracy, we can learn about the significance of the proposed framework more clearly.
For example, taking the accuracy of SS-SGC on the Pubmed with $0.5\%$ dataset -- 0.81 as a goal, we notice that the original SGC needs up to $5\%$ of the training data to achieve this accuracy, which is 10 times of SS-SGC.
In other words, the proposed training framework can reduce $90\%$ need for labeled data to get the same accuracy.

Overall, we can summarize the conclusions drawn from the results:
1) Algorithms with the proposed training framework can get a $4\%-60\%$ accuracy improvement under the same situation.
2) Methods with the proposed training framework can get a close performance just with $10\%-50\%$ of the original training data scale. 
The outstanding results indicate that our framework can easily improve the GCN-related method performance, especially with extremely limited labeled data.


\subsection{Sampling Strategy Comparison}
\begin{figure}[!t]
  \centering
  \includegraphics[width=0.25\textwidth]{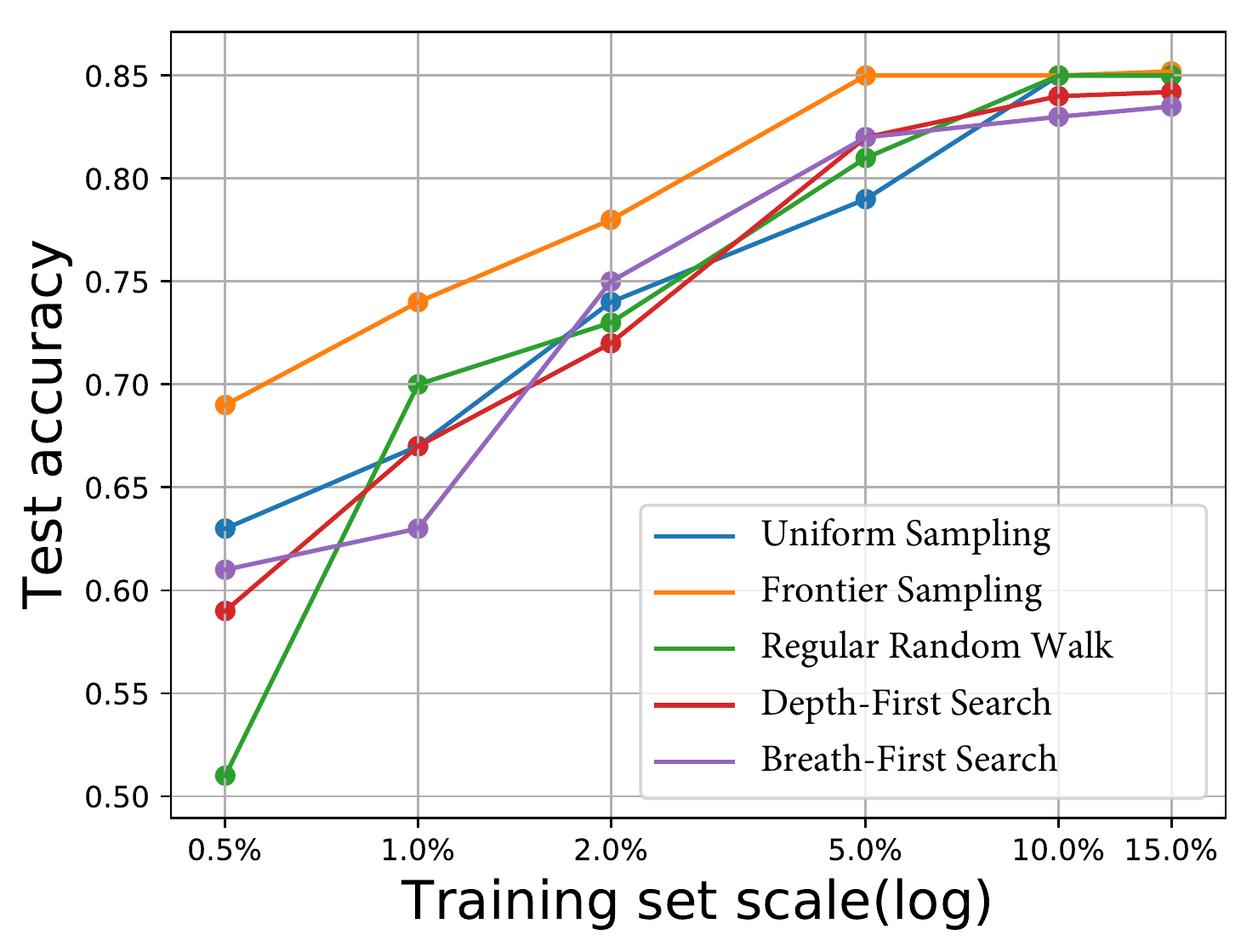}
  \caption{Comparison of different sampling strategy .}
  \label{sampling_comparision}
  \vspace{-0.5cm}
\end{figure}

We do a case study on the Cora dataset with GraphSAGE to verify the efficiency of different sampling strategies, including `Frontier Sampling', `Uniform Sampling', `Regular Random Walks', `Depth-First Search', and `Breadth-First Search'.
The results in Fig \ref{sampling_comparision} indicate that the strategy of `Frontier Sampling' adopted in this paper achieves better results than others.
Regular random walks performs the worst when the sampling scale shrinks to $0.5\%$, caused by its nature of `easily been trapped', but its accuracy rises dramatically when the scale gets larger. 
To be noticed, when the training scale gets larger, uniformly sampled data can get close accuracy with ours, which meets the `Law of Large Number'\cite{hsu1947complete}.

\subsection{Model Compression}\label{sec:5.7}

\begin{figure}[!t]
  \centering
  \subfloat[Additional Time Cost\label{compress_time}]{%
  \includegraphics[width=0.24\textwidth]{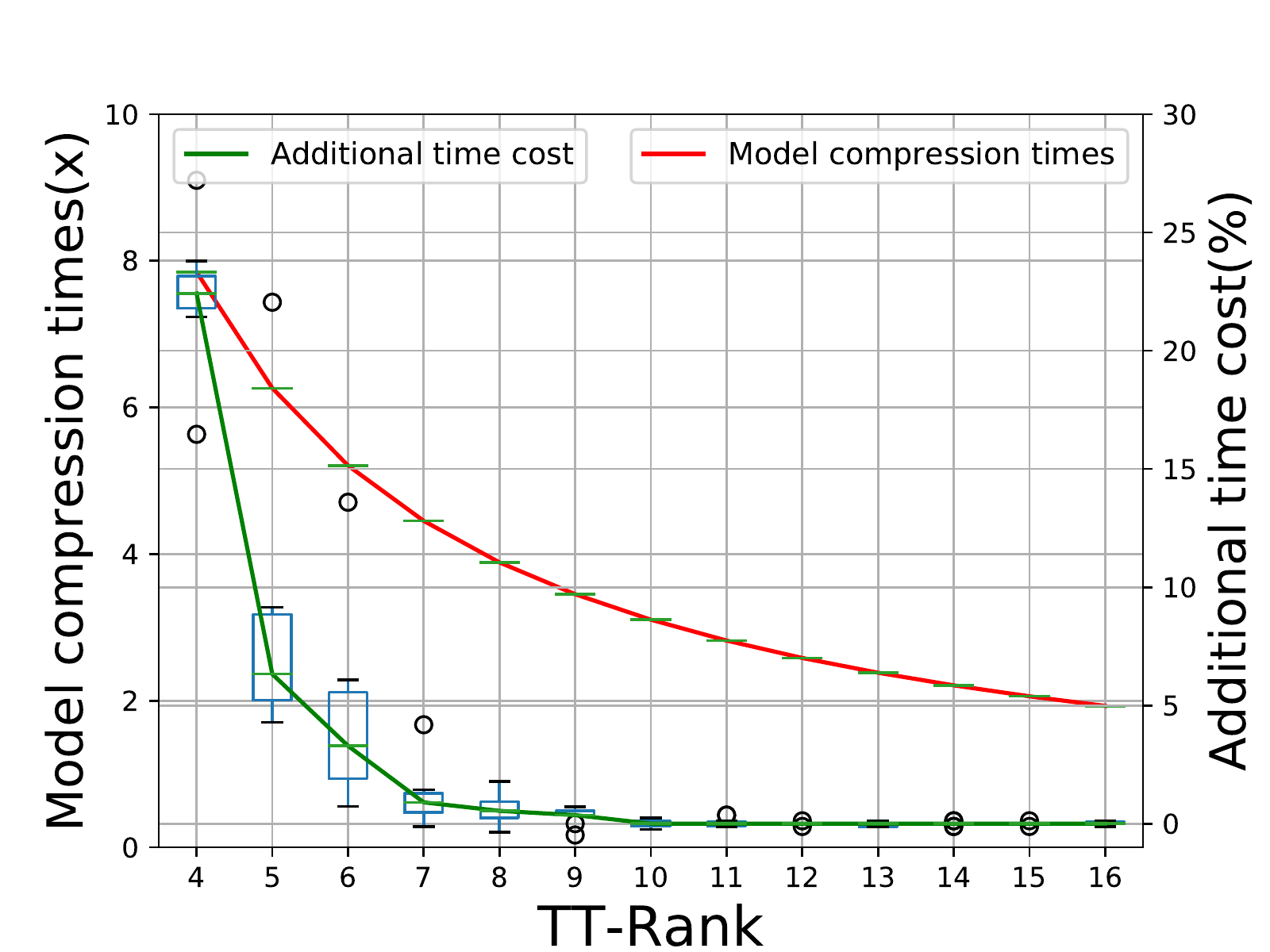}
  }
  \subfloat[Accuracy Decrease \label{compress_accuracy}]{%
  \includegraphics[width=0.24\textwidth]{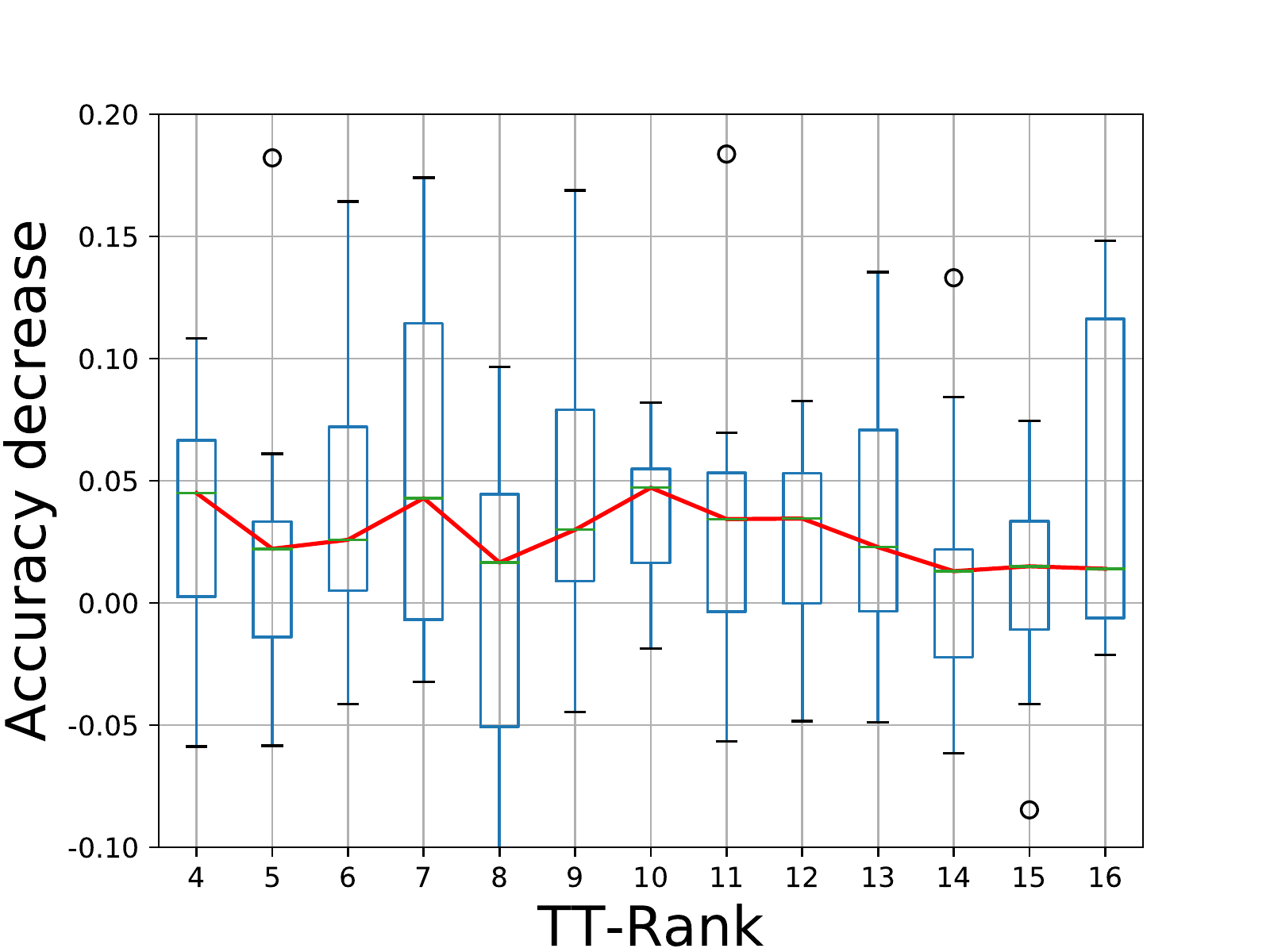}
  }
  \caption{Influence analysis on the additional time cost and prediction accuracy by the model compression.}
  \vspace{-0.5cm}
\end{figure}

To reveal the influence of model compression on the GCN-model, additional time-cost and model accuracy are evaluated.
Fig. \ref{compress_time} shows that compression can reduce the scale of a GCN model with an acceptable additional time-cost increase. 
It costs an additional 16$\%$ computation time to exchange more than 6x parameter reduction of the model.
Fig. \ref{compress_accuracy} shows the model accuracy changes with the TT-rank $r$ rising. 
It can be seen that the average accuracy changes are around 0.02.




\subsection{Algorithm Efficiency} \label{sec:4.4}
The well-distributed training data and fewer parameters to estimate would help the model converging quickly,  which is a common way to reduce the training time.
To evaluate the contribution to reducing the training time, we carried out a case study on GraphSAGE and SS-GraphSAGE with the Pubmed dataset.
We set 10$\%$ of the dataset as the training data, and the number of training epochs is set as 200.

\begin{figure}[!t]
  \centering
  \includegraphics[width=0.25\textwidth]{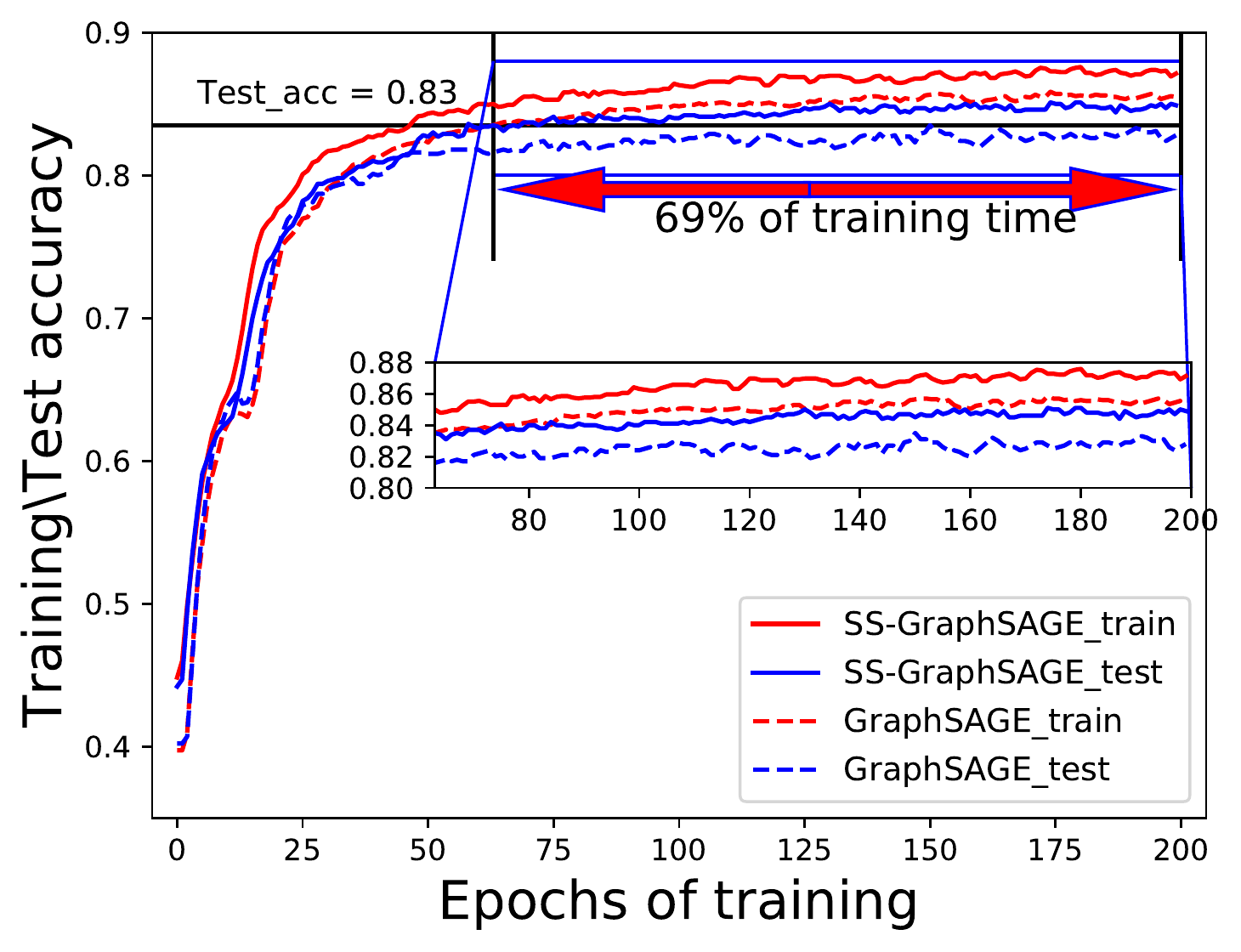}
  \caption{Comparison of convergence speed between the SS-GraphSage with the original one.}
  \label{coverage_speed}
  \vspace{-0.3cm}
\end{figure}

The convergence speeds of  GraphSAGE and SS-GraphSAGE are shown in Fig.\ref{coverage_speed}.
The results indicate that both the training accuracy and test accuracy of SS-GraphSAGE consistently outperform the origin GraphSAGE.
If we set the final test accuracy--0.83 as a threshold, SS-GraphSAGE can achieve a similar test accuracy with only 62 epochs, which decreases the training time by 69$\%$.

\subsection{Parameter Influence}
We take a numerical evaluation on the influence of sampling scale $B$ and the number of random walks dimensions $m$ with SS-GCN on the Cora dataset.

\subsubsection{Sampling Scale}
Fig. \ref{sample_scale} shows the accuracy distance from the steady performance with different sampling budget $B$ on the multi-label classification task.
The steady accuracy is obtained by taking 50$\%$ of the nodes as training data.
From the result, we can observe that the sampling-based methods can easily approach the steady performance with about 1$\%$ to 3$\%$ of nodes as sampling training data.
It reveals the power of the training framework added to the GCN-based methods.

\begin{figure}[!t]
  \centering
  \subfloat[Sampling Scale \label{sample_scale}]{%
  \includegraphics[width=0.23\textwidth]{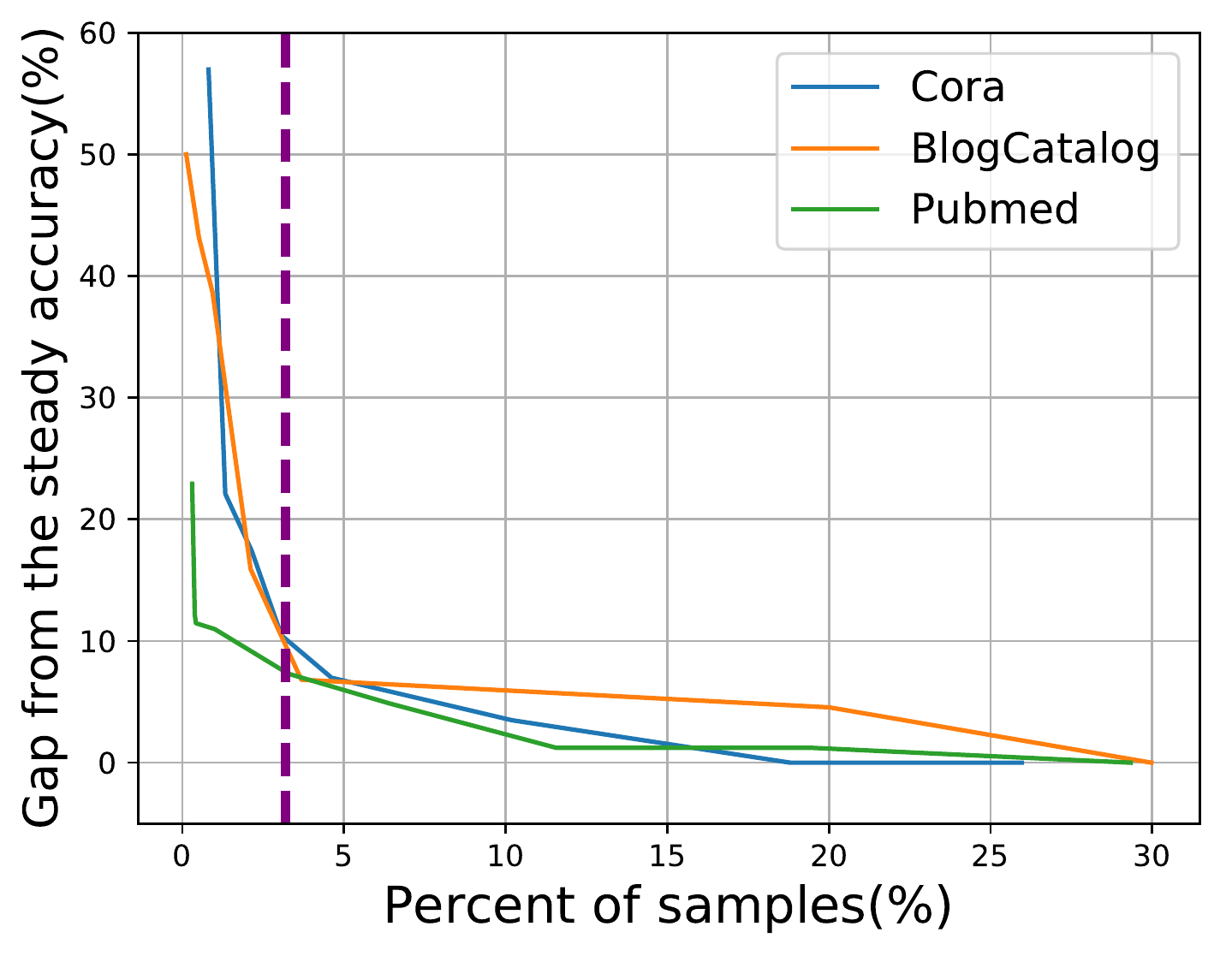}
  }
  \subfloat[Number of Walkers\label{walker_number}]{%
  \includegraphics[width=0.26\textwidth]{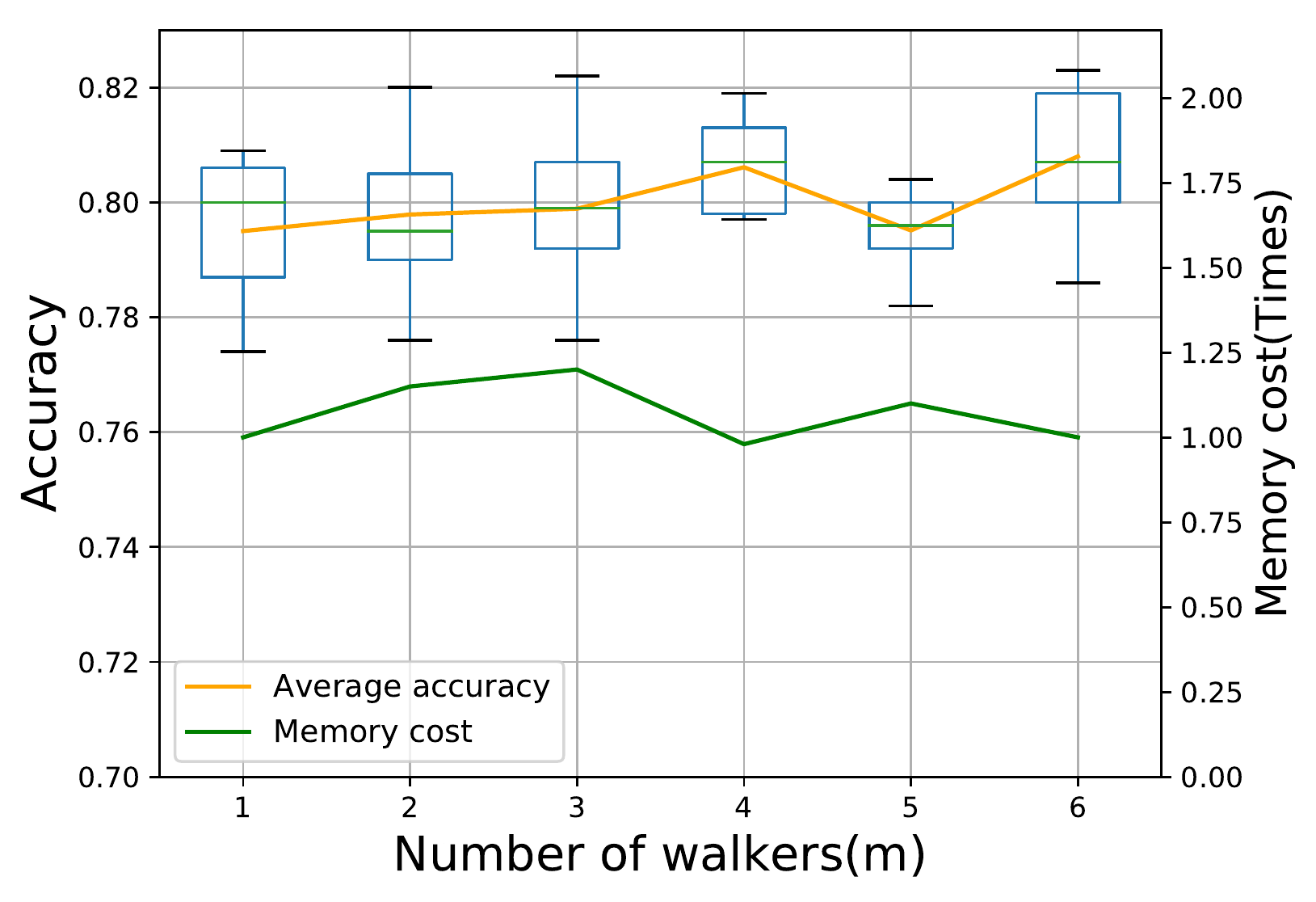}
  }
  \caption{Influence of sampling scale and dimension on the performance of SS-GCN.}
  \vspace{-0.5cm}
\end{figure}

\subsubsection{Number of Walkers}
The sampling scheme performs a $m$ dimensional random walks.  
Fig. \ref{walker_number} shows the influence of the number of walkers $m$ on accuracy and memory cost. 
The box figure shows the distribution of accuracy, the green line is the median accuracy, and the yellow line across the boxes connects the average accuracy. It shows small fluctuations with the changing value of $m$. 
The memory cost is also plotted in the same figure, which stays at the same level as the $m$ ranging.

\section{Related Work}\label{sec:6}
Two lines of research are related to our work, which is summarized as follows.
\subsection{GCN-based Methods} 
Graph neural networks have attracted a lot of attention recently, and various outstanding works are proposed \cite{bruna2014spectral,defferrard2016convolutional,duvenaud2015convolutional}. 
Kipf's  GCN\shortcite{kipf2017semi} has brought it under the spotlight. 
Since then, researchers seek to build a more effective network structure. 
For example, GraphSAGE\cite{hamilton2017inductive} is proposed to deal with the dynamic graphs, and Graph Attention Network\cite{velivckovic2017graph} is proposed to weight the node's neighbors. 
Some works also focus on the problem of training efficiency.
FastGCN\cite{chen2018fastgcn} is one of the pioneers to accelerate the training process by eliminating part of neurons. 
SGC\cite{wu2019simplifying} steps further by simplifying convolutional computation.

\subsection{ Graph Sampling Based on Random Walks}
Sampling methods, especially random walk-based graph sampling methods, have been
widely studied\cite{avrachenkov2010improving,ribeiro2012sampling,xu2014general}. 
Leskovec \textit{et al.} \shortcite{leskovec2006sampling} come up with an efficient way to down-size the sampling scale based on random walks. 
Wei \textit{et al.}\shortcite{wei2004towards} work out how to make the sampling by random walks more efficient. 
Random walks-based sampling can also be used in overlay networks\cite{massoulie2006peer}.
Based on the graph structure's prior knowledge, Zhao \textit{et al.} \cite{zhao2019sampling}  proposed a biased graph sampling strategy by random walks with indirect jumps.

\section{Conclusion} \label{sec:7}
Faced with the challenge of extremely limited annotations for GCN-based methods, we propose an efficient training framework to improve their performance. 
The framework integrating a sampling strategy with a model compression method can obtain well-distributed training data and lower the scale of parameters to estimate, which can significantly improve GCN-based methods' performance.
Six popular GCN baselines are chosen to conduct extensive experiments on three real-world datasets. 
The results indicate that by applying our method, all GCN baselines cut down the annotation requirement by as much as 90$\%$ and compress the scale of parameters more than 6$\times$  without sacrificing their strong performance.


\bibliography{ref}

\begin{thebibliography}{30}
\providecommand{\natexlab}[1]{#1}
\providecommand{\url}[1]{\texttt{#1}}
\providecommand{\urlprefix}{URL }
\expandafter\ifx\csname urlstyle\endcsname\relax
  \providecommand{\doi}[1]{doi:\discretionary{}{}{}#1}\else
  \providecommand{\doi}{doi:\discretionary{}{}{}\begingroup
  \urlstyle{rm}\Url}\fi

\bibitem[{Avrachenkov, Ribeiro, and Towsley(2010)}]{avrachenkov2010improving}
Avrachenkov, K.; Ribeiro, B.; and Towsley, D. 2010.
\newblock Improving random walk estimation accuracy with uniform restarts.
\newblock In \emph{International Workshop on Algorithms and Models for the
  Web-Graph}, 98--109. Springer.

\bibitem[{Bruna et~al.(2014)Bruna, Zaremba, Szlam, and
  Lecun}]{bruna2014spectral}
Bruna, J.; Zaremba, W.; Szlam, A.; and Lecun, Y. 2014.
\newblock Spectral networks and locally connected networks on graphs.
\newblock In \emph{International Conference on Learning Representations
  (ICLR2014), CBLS, April 2014}, http--openreview.

\bibitem[{Chen, Ma, and Xiao(2018)}]{chen2018fastgcn}
Chen, J.; Ma, T.; and Xiao, C. 2018.
\newblock Fastgcn: fast learning with graph convolutional networks via
  importance sampling.
\newblock \emph{arXiv preprint arXiv:1801.10247} .

\bibitem[{Defferrard, Bresson, and
  Vandergheynst(2016)}]{defferrard2016convolutional}
Defferrard, M.; Bresson, X.; and Vandergheynst, P. 2016.
\newblock Convolutional neural networks on graphs with fast localized spectral
  filtering.
\newblock In \emph{Advances in neural information processing systems},
  3844--3852.

\bibitem[{Du et~al.(2017)Du, Zhang, Wu, Moura, and Kar}]{du2017topology}
Du, J.; Zhang, S.; Wu, G.; Moura, J.~M.; and Kar, S. 2017.
\newblock Topology adaptive graph convolutional networks.
\newblock \emph{arXiv preprint arXiv:1710.10370} .

\bibitem[{Duvenaud et~al.(2015)Duvenaud, Maclaurin, Iparraguirre, Bombarell,
  Hirzel, Aspuru-Guzik, and Adams}]{duvenaud2015convolutional}
Duvenaud, D.~K.; Maclaurin, D.; Iparraguirre, J.; Bombarell, R.; Hirzel, T.;
  Aspuru-Guzik, A.; and Adams, R.~P. 2015.
\newblock Convolutional networks on graphs for learning molecular fingerprints.
\newblock In \emph{Advances in neural information processing systems},
  2224--2232.

\bibitem[{Grasedyck and Hackbusch(2011)}]{grasedyck2011introduction}
Grasedyck, L.; and Hackbusch, W. 2011.
\newblock An introduction to hierarchical (H-) rank and TT-rank of tensors with
  examples.
\newblock \emph{Computational methods in applied mathematics} 11(3): 291--304.

\bibitem[{Hamilton, Ying, and Leskovec(2017)}]{hamilton2017inductive}
Hamilton, W.; Ying, Z.; and Leskovec, J. 2017.
\newblock Inductive representation learning on large graphs.
\newblock In \emph{Advances in Neural Information Processing Systems},
  1024--1034.

\bibitem[{Hsu and Robbins(1947)}]{hsu1947complete}
Hsu, P.-L.; and Robbins, H. 1947.
\newblock Complete convergence and the law of large numbers.
\newblock \emph{Proceedings of the National Academy of Sciences of the United
  States of America} 33(2): 25.

\bibitem[{Hu et~al.(2019)Hu, Liu, Gomes, Zitnik, Liang, Pande, and
  Leskovec}]{hu2019pretraining}
Hu, W.; Liu, B.; Gomes, J.; Zitnik, M.; Liang, P.; Pande, V.; and Leskovec, J.
  2019.
\newblock Pre-training Graph Neural Networks.

\bibitem[{Kipf and Welling(2017)}]{kipf2017semi}
Kipf, T.~N.; and Welling, M. 2017.
\newblock Semi-Supervised Classification with Graph Convolutional Networks.
\newblock In \emph{International Conference on Learning Representations
  (ICLR)}.

\bibitem[{Klicpera, Bojchevski, and G{\"u}nnemann(2018)}]{klicpera2018predict}
Klicpera, J.; Bojchevski, A.; and G{\"u}nnemann, S. 2018.
\newblock Predict then propagate: Graph neural networks meet personalized
  pagerank.
\newblock \emph{arXiv preprint arXiv:1810.05997} .

\bibitem[{LeCun, Bengio, and Hinton(2015)}]{lecun2015deep}
LeCun, Y.; Bengio, Y.; and Hinton, G. 2015.
\newblock Deep learning.
\newblock \emph{nature} 521(7553): 436--444.

\bibitem[{Leskovec and Faloutsos(2006)}]{leskovec2006sampling}
Leskovec, J.; and Faloutsos, C. 2006.
\newblock Sampling from large graphs.
\newblock In \emph{Proceedings of the 12th ACM SIGKDD international conference
  on Knowledge discovery and data mining}, 631--636. ACM.

\bibitem[{Liu et~al.(2016)Liu, Zhou, Hu, and Guan}]{liu2016miracle}
Liu, X.; Zhou, Y.; Hu, C.; and Guan, X. 2016.
\newblock MIRACLE: A multiple independent random walks community parallel
  detection algorithm for big graphs.
\newblock \emph{Journal of Network and Computer Applications} 70: 89--101.

\bibitem[{Liu et~al.(2014)Liu, Zhou, Hu, Guan, and Leng}]{liu2014detecting}
Liu, X.; Zhou, Y.; Hu, C.; Guan, X.; and Leng, J. 2014.
\newblock Detecting community structure for undirected big graphs based on
  random walks.
\newblock In \emph{Proceedings of the 23rd International Conference on World
  Wide Web}, 1151--1156.

\bibitem[{Massouli{\'e} et~al.(2006)Massouli{\'e}, Le~Merrer, Kermarrec, and
  Ganesh}]{massoulie2006peer}
Massouli{\'e}, L.; Le~Merrer, E.; Kermarrec, A.-M.; and Ganesh, A. 2006.
\newblock Peer counting and sampling in overlay networks: random walk methods.
\newblock In \emph{Proceedings of the twenty-fifth annual ACM symposium on
  Principles of distributed computing}, 123--132. ACM.

\bibitem[{Novikov et~al.(2015)Novikov, Podoprikhin, Osokin, and
  Vetrov}]{novikov2015tensorizing}
Novikov, A.; Podoprikhin, D.; Osokin, A.; and Vetrov, D.~P. 2015.
\newblock Tensorizing neural networks.
\newblock In \emph{Advances in neural information processing systems},
  442--450.

\bibitem[{Oseledets(2011)}]{oseledets2011tensor}
Oseledets, I.~V. 2011.
\newblock Tensor-train decomposition.
\newblock \emph{SIAM Journal on Scientific Computing} 33(5): 2295--2317.

\bibitem[{Page et~al.(1999)Page, Brin, Motwani, and
  Winograd}]{page1999pagerank}
Page, L.; Brin, S.; Motwani, R.; and Winograd, T. 1999.
\newblock The PageRank citation ranking: Bringing order to the web.
\newblock Technical report, Stanford InfoLab.

\bibitem[{Ribeiro and Towsley(2010)}]{ribeiro2010estimating}
Ribeiro, B.; and Towsley, D. 2010.
\newblock Estimating and sampling graphs with multidimensional random walks.
\newblock In \emph{Proceedings of the 10th ACM SIGCOMM conference on Internet
  measurement}, 390--403. ACM.

\bibitem[{Ribeiro et~al.(2012)Ribeiro, Wang, Murai, and
  Towsley}]{ribeiro2012sampling}
Ribeiro, B.; Wang, P.; Murai, F.; and Towsley, D. 2012.
\newblock Sampling directed graphs with random walks.
\newblock In \emph{2012 Proceedings IEEE INFOCOM}, 1692--1700. IEEE.

\bibitem[{Veli{\v{c}}kovi{\'c} et~al.(2017)Veli{\v{c}}kovi{\'c}, Cucurull,
  Casanova, Romero, Lio, and Bengio}]{velivckovic2017graph}
Veli{\v{c}}kovi{\'c}, P.; Cucurull, G.; Casanova, A.; Romero, A.; Lio, P.; and
  Bengio, Y. 2017.
\newblock Graph attention networks.
\newblock \emph{arXiv preprint arXiv:1710.10903} .

\bibitem[{Wei, Erenrich, and Selman(2004)}]{wei2004towards}
Wei, W.; Erenrich, J.; and Selman, B. 2004.
\newblock Towards efficient sampling: Exploiting random walk strategies.
\newblock In \emph{AAAI}, volume~4, 670--676.

\bibitem[{Wu et~al.(2019)Wu, Souza, Zhang, Fifty, Yu, and
  Weinberger}]{wu2019simplifying}
Wu, F.; Souza, A.; Zhang, T.; Fifty, C.; Yu, T.; and Weinberger, K. 2019.
\newblock Simplifying Graph Convolutional Networks.
\newblock In \emph{International Conference on Machine Learning}, 6861--6871.

\bibitem[{Xu, Lee et~al.(2014)}]{xu2014general}
Xu, X.; Lee, C.-H.; et~al. 2014.
\newblock A general framework of hybrid graph sampling for complex network
  analysis.
\newblock In \emph{IEEE INFOCOM 2014-IEEE Conference on Computer
  Communications}, 2795--2803. IEEE.

\bibitem[{Yun et~al.(2019)Yun, Jeong, Kim, Kang, and Kim}]{yun2019graph}
Yun, S.; Jeong, M.; Kim, R.; Kang, J.; and Kim, H.~J. 2019.
\newblock Graph Transformer Networks.
\newblock In \emph{Advances in Neural Information Processing Systems},
  11960--11970.

\bibitem[{Zhang et~al.(2020)Zhang, Xiong, Ye, Liu, Wang, Zhu, and
  Yu}]{zhang2020seal}
Zhang, Y.; Xiong, Y.; Ye, Y.; Liu, T.; Wang, W.; Zhu, Y.; and Yu, P.~S. 2020.
\newblock SEAL: Learning Heuristics for Community Detection with Generative
  Adversarial Networks.
\newblock In \emph{Proceedings of the 26th ACM SIGKDD International Conference
  on Knowledge Discovery \& Data Mining}, 1103--1113.

\bibitem[{Zhang and Sabuncu(2018)}]{zhang2018generalized}
Zhang, Z.; and Sabuncu, M. 2018.
\newblock Generalized cross entropy loss for training deep neural networks with
  noisy labels.
\newblock In \emph{Advances in neural information processing systems},
  8778--8788.

\bibitem[{Zhao et~al.(2019)Zhao, Wang, Lui, Towsley, and
  Guan}]{zhao2019sampling}
Zhao, J.; Wang, P.; Lui, J.~C.; Towsley, D.; and Guan, X. 2019.
\newblock Sampling online social networks by random walk with indirect jumps.
\newblock \emph{Data Mining and Knowledge Discovery} 33(1): 24--57.

\end{thebibliography}

\end{document}